\definecolor{aquamarine}{rgb}{0.5, 1.0, 0.83}
\definecolor{brilliantlavender}{rgb}{0.96, 0.73, 1.0}
\definecolor{jonquil}{rgb}{0.98, 0.85, 0.37}
\definecolor{celadon}{rgb}{0.67, 0.88, 0.69}
\newcommand\footnoteref[1]{\protected@xdef\@thefnmark{\ref{#1}}\@footnotemark}
\newtheorem{theorem}{Theorem}[]
\title{Attention over Parameters \\ for Dialogue Systems}
\author{%
Andrea Madotto, Zhaojiang Lin, Chien-Sheng Wu, Jamin Shin, Pascale Fung \\
  Centre for Artificial Intelligence Research (CAiRE)\\
  The Hong Kong University of Science and Technology\\
  Clear Water Bay, Hong Kong \\
  {\tt [amadotto,zlinao,cwuak,jmshinaa]@connect.ust.hk } \\
  {\tt pascale@ece.ust.hk}
}
\begin{document}

\maketitle
\begin{abstract}
Dialogue systems require a great deal of different but complementary expertise to assist, inform, and entertain humans. For example, different domains (e.g., restaurant reservation, train ticket booking) of goal-oriented dialogue systems can be viewed as different skills, and so does ordinary chatting abilities of chit-chat dialogue systems. In this paper, we propose to learn a dialogue system that independently parameterizes different dialogue skills, and learns to select and combine each of them through \textit{Attention over Parameters} (\textit{AoP}). The experimental results show that this approach achieves competitive performance on a combined dataset of MultiWOZ~\citep{budzianowski2018multiwoz}, In-Car Assistant~\citep{ericKVR2017}, and Persona-Chat~\citep{personachat}. Finally, we demonstrate that each dialogue skill is effectively learned and can be combined with other skills to produce selective responses. 
\end{abstract}

\section{Introduction}
Unlike humans who can do both, goal-oriented dialogues~\citep{williams2007partially,young2013pomdp} and chit-chat conversations~\citep{serban2016generative,vinyals2015neural} are often learned with separate models. A more desirable approach for the users would be to have a single chat interface that can handle both casual talk and tasks such as reservation or scheduling. This can be formulated as a problem of learning different conversational skills across multiple domains. A skill can be either querying a database, generating daily conversational utterances, or interacting with users in a particular task-domain (e.g. booking a restaurant). One challenge of having multiple skills is that existing datasets either focus only on chit-chat or on goal-oriented dialogues. This is due to the fact that traditional goal-oriented systems are modularized~\citep{williams2007partially,hori2009statistical,lee2009example,levin2000stochastic,young2013pomdp}; thus, they cannot be jointly trained with end-to-end architecture as in chit-chat. 
However, recently proposed end-to-end trainable models~\citep{eric-manning:2017:EACLshort,wu2019global,reddy2018multi,yavuzdeepcopy} and datasets~\citep{bordes2016learning,ericKVR2017} allow us to combine goal-oriented~\citep{budzianowski2018multiwoz,ericKVR2017} and chit-chat~\citep{personachat} into a single benchmark dataset with multiple conversational skills as shown in Table~\ref{Example-table}.

A straight forward solution would be to have a single model for all the conversational skills, which has shown to be effective to a certain extent by~\citep{zhao2017generative} and \citep{mccann2018natural}. Putting aside the performance in the tasks, such fixed shared-parameter framework, without any task-specific designs, would lose controllability and interpretability in the response generation. In this paper, instead, we propose to model multiple conversational skills using the \textit{Mixture of Experts} (MoE)~\citep{jacobs1991adaptive} paradigm, i.e., a model that learns and combine independent specialized experts using a gating function. 
\begin{table*}[t]
\centering

\begin{tabular}{rlc}
\hline
\multicolumn{1}{c}{\textbf{Spk.}} & \multicolumn{1}{c}{\textbf{Conversation}} & \textbf{Skills} \\ \hline
\textit{Usr:} & Can you help me find a cheap 2 star hotel? & \multicolumn{1}{l}{} \\
\rowcolor[HTML]{EFEFEF} 
\textit{Sys:} & \begin{tabular}[c]{@{}l@{}}\texttt{{\color[HTML]{3166FF} SELECT} * {\color[HTML]{3166FF} FROM} hotel {\color[HTML]{3166FF} WHERE} pricerange=`cheap'}\\ \texttt{{\color[HTML]{3166FF} AND} stars=2 {\color[HTML]{3166FF} AND} type=`hotel'}\end{tabular} & {\color[HTML]{FD6864} SQL+HOTEL} \\
\rowcolor[HTML]{EFEFEF} 
\textit{Mem:} & (Result table from the SQL query, with \textbf{real} entities.) &  \\
\rowcolor[HTML]{EFEFEF} 
\textit{Sys:} & \begin{tabular}[c]{@{}l@{}}Arms is a 2 star hotel in the centre. Shall I book a room for you?\end{tabular} & {\color[HTML]{32CB00} HOTEL} \\
\textit{Usr:} & Yes, thank you! &  \\
\rowcolor[HTML]{EFEFEF} 
\textit{Sys:} & Sure, for how many people should I book it for? & {\color[HTML]{32CB00} HOTEL} \\
\multicolumn{1}{l}{} &  & \multicolumn{1}{l}{} \\
\multicolumn{3}{c}{$\cdots$} \\
\multicolumn{1}{l}{} &  & \multicolumn{1}{l}{} \\
\rowcolor[HTML]{EFEFEF} 
\textit{Sys:} & Hello! How are you today? & {\color[HTML]{9698ED} PERSONA} \\
\textit{Usr:} & I am good thank you! Do you like to ski? I really like it. &  \\
\rowcolor[HTML]{EFEFEF} 
\textit{Sys:} & No, I like to code in Haskell. Do you like to code? & {\color[HTML]{9698ED} PERSONA} \\ \hline
\end{tabular}
\caption{An example from the dataset which includes both chit-chat and task-oriented conversations. The model has to predict all the \textit{Sys} turn, which includes SQL query and generating response from a the Memory content, which is dynamically updated with the queries results. The skills are the prior knowledge needed for the response, where Persona refers to chit-chat.}
\label{Example-table}
\end{table*}
For instance, each expert could specialize in different dialogues domains (e.g., Hotel, Train, Chit-Chat etc.) and skills (e.g., generate SQL query). A popular implementation of MoE \citep{shazeer2017outrageously,kaiser2017one} uses a set of linear transformation (i.e., experts) in between two LSTM \citep{schmidhuber:1987:srl} layers. However, several problems arise with this implementation: 1) the model is computationally expensive as it has to decode multiple times each expert and make the combination at the representation-level; 2) no prior knowledge is injected in the expert selection (e.g., domains); 3) Seq2Seq model has limited ability in extracting information from a Knowledge Base (KB) (i.e., generated by the SQL query)~\citep{ericKVR2017}, as required in end-to-end task-oriented dialogues systems~\citep{bordes2016learning}. The latter can be solved by using more advanced multi-hop models like the Transformer~\citep{vaswani2017attention}, but the remaining two need to be addressed. Hence, in this paper we: 
\begin{itemize}[leftmargin=*]
    \item propose a novel Transformer-based architecture called \textit{Attention over Parameters} (AoP). This model parameterize the conversational skills of end-to-end dialogue systems with independent decoder parameters (experts), and learns how to dynamically select and combine the appropriate decoder parameter sets by leveraging prior knowledge from the data such as domains and skill types; 
    \item proof that AoP is algorithmically more efficient compared to forwarding all the Transformer decoders and then mix their output representation, like is normally done in MoE. Figure~\ref{fig:aopdiagram} illustrates the high-level intuition of the difference;
    %
    \item empirically show the effectiveness of using specialized parameters in a combined dataset of MultiWOZ~\citep{budzianowski2018multiwoz}, In-Car Assistant~\citep{ericKVR2017}, and Persona-Chat~\citep{personachat}, which to the best of our knowledge, is the first evaluation of this genre i.e. end-to-end large-scale multi-domains/skills. Moreover, we show that our model is highly interpretable and is able to combine different learned skills to produce compositional responses.
\end{itemize}


\section{Methodology}
We use the standard encoder-decoder architecture and avoid any task-specific designs~\citep{wu2019global,reddy2018multi}, as we aim to build a generic conversation model for both chit-chat and task-oriented dialogues. More specifically, we use a Transformer for both encoder and decoder.

Let us define the sequence of tokens in the dialogue history as $D=\{ d_1,\dots,d_m\}$ and the dynamic memory content as a sequence of tokens $M=\{ m_1,\dots,m_{z}\}$. The latter can be the result of a SQL query execution (e.g., table) or plain texts (e.g., persona description), depending on the task. The dialogue history $D$ and the memory $M$ are concatenated to obtain the final input denoted by $X=[D;M] =\{ x_1,\dots,x_{n=m+z}\}$. We then denote $Y=\{ y_1,\dots,y_k\}$ as the sequence of tokens that the model is expected to produce. Without loss of generality, $Y$ can be both plain text and SQL-like queries. Hence, the model has to learn when to issue database queries and when to generate human-like responses. Finally, we define a binary skill vector $V=\{ v_1,\dots,v_{r} \}$ that specifies the type of skills required to generate $Y$. This can be considered as a prior vector for learning to select the correct expert during the training\footnote{the vector $V$ will be absent during the testing}. For example, in Table~\ref{Example-table} the first response is of type SQL in the Hotel domain, thus the skill vector $V$ will have $v_{\text{SQL}}=1$ and $v_{\text{Hotel}}=1$, while all the other skill/domains are set to zero~\footnote{With the assumption that at each index in $V$ is assigned a semantic skill (e.g. SQL position $i$)}. More importantly, we may set the vector $V$ to have multiple ones to enforce the model to compose skills to achieve a semantic compositionality of different experts.

 \begin{figure*}[t]
    \centering
    \includegraphics[width=0.92\linewidth]{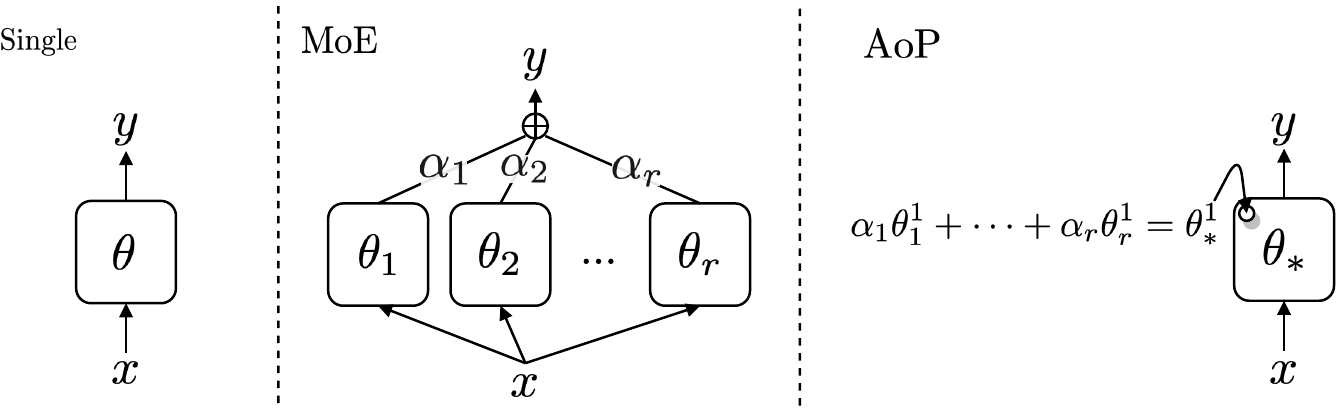}
        \caption{Comparisons between Single model, Mixture of Experts (MoE)~\citep{jacobs1991adaptive}, and Attention over Parameters (\textit{AoP}).}
    \label{fig:aopdiagram}
\end{figure*}
\subsection{Encoder-Decoder}
To map the input sequence to the output sequence, we use a standard Transformer~\citep{vaswani2017attention} and denote the encoder and decoder as TRS$_{enc}$ and TRS$_{dec}$, respectively. The input of a Transformer is the embedded representation of the input words; thus, we define a word embedding matrix $E\in \mathbb{R}^{d \times |V|}$ where $d$ is the embedding size and $|V|$ is the cardinality of the vocabulary. The input $X$, with its positional embedding (Appendix A1 for more information), are encoded as the following equation:
\begin{equation}
    H = \textrm{TRS}_{enc}(E(X)) , 
\end{equation}
where $H\in \mathbb{R}^{d_{model} \times n}$, and E. Then the decoder receives the target sequence shifted by one $Y_{:k-1} = \{\text{<SOS>}, y_1, \dots, y_k\}$ as the input. Using teacher-forcing~\citep{williams1989learning}, the model is trained to produce the correct sequence $Y$. The output of the decoder is produced as follow:
\begin{equation}
    O = \textrm{TRS}_{dec}(E(Y_{:k-1}),H) , \label{eq_dec}
\end{equation}
where $O\in \mathbb{R}^{d_{model} \times k}$. Finally, a distribution over the vocabulary is generated for each token by an affine transformation $W\in \mathbb{R}^{d_{model} \times |V|}$ followed by a Softmax function. 
\begin{equation}
    P(Y|X) = \textrm{Softmax}(O^TW)  ,
\end{equation}
In addition, $P(Y|X)$ is mixed with the encoder-decoder attention distribution to enable to copy token from the input sequence as in~\citep{see-liu-manning:2017:Long}. The model is then trained to minimize a standard cross entropy loss function and at inference time to generate one token at the time in an auto-regressive manner~\citep{graves2013generating}. Hence, the training loss is defined as: 
\begin{equation}
\mathcal{L}_{P(Y|X)}=-\sum_{t=1}^{k}\log \left(P(Y|X)_t\left(y_{t}\right)\right).
\end{equation}

\subsection{Attention over Parameters}
\label{sec:aop}
The main idea is to produce a single set of parameters for decoder TRS$_{dec}$ by the weighted sum of $r$ independently parameterized decoders. This process is similar to attention~\citep{luong-pham-manning:2015:EMNLP} where the memories are the parameters and the query is the encoded representation. Let us define $\Theta=[ \theta_1,\dots,\theta_r ]$ as the list of parameters for $r$ decoders, since a TRS$_{dec}$ is represented by its parameters $\theta$. Since each $\theta$ can be sized in the order of millions, we assign the corresponding key vectors to each $\theta$, similar to key-value memory networks~\citep{miller2016key}. Thus, we use a key matrix $K\in \mathbb{R}^{d_{model} \times r}$ and a Recurrent Neural Networks (RNN), in this instance a GRU~\citep{cho2014learning}, to produce the query vector by processing the encoder output $H$.  The attention weights for each decoders' parameters is computed as follow:
\begin{align}
    q      &= \textrm{RNN}(H)\\
    \alpha &= \text{Softmax}(qK) 
    \label{eq_att}
\end{align}
where $q \in \mathbb{R}^{d_{model}}$ and $\alpha \in \mathbb{R}^r$ is the attention vectors where each $\alpha_i$ is the score corresponding to $\theta_i$. Hence, the new set of parameters is computed as follow:
\begin{equation}
    \theta^* = \sum_i^r \alpha_i \theta_i \label{eq:aop}
\end{equation}
The combined set of parameters $\theta^*$ are then used to initialize a new TRS$_{dec}$, and Equation~\ref{eq_dec} will be applied to the input based on this. Equation~\ref{eq_att} is similar to the gating function proposed in~\citep{shazeer2017outrageously, jacobs1991adaptive}, but the resulting scoring vector $\alpha$ is applied directly to the parameter instead of the output representation of each decoder, holding an algorithmically faster computation. 

\begin{theorem}
The computation cost of Attention over Parameters (AoP) is always lower than Mixture Of Experts (MoE) for sequence longer than 1.
\end{theorem}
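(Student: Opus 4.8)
The plan is to express both quantities in a common unit --- scalar multiply--add operations --- as functions of the target length $k$, the number of decoders $r$ (we assume $r\ge 2$, since otherwise there is nothing to combine), and the parameter count $p=|\theta_i|$ of one decoder, which is the same for all $i$ because the $\theta_i$ are identically shaped. First I would cancel the work that is \emph{identical} in the two schemes: running the shared encoder $\textrm{TRS}_{enc}$ to obtain $H$, and producing the mixture weights $\alpha$ (via Equation~\ref{eq_att} for AoP, via the analogous gate of~\citep{shazeer2017outrageously,jacobs1991adaptive} for MoE). What is left to compare is: AoP forms $\theta^* = \sum_i \alpha_i \theta_i$ and then runs one decoder pass $\textrm{TRS}_{dec}(\cdot,H)$ on the length-$k$ input, whereas MoE runs $r$ decoder passes on that same input and then linearly combines their $r$ outputs, $\sum_i \alpha_i O_i$.

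Next I would bound the pieces. Write $C_{dec}(\ell)$ for the cost of one decoder pass on a length-$\ell$ input. Every parameter of the decoder is used at least once for each of the $\ell$ output positions, and on top of that a pass pays the strictly positive $\Theta(\ell^2)$ self-attention FLOPs; hence $C_{dec}(\ell) > \ell\,p$. Forming $\theta^*$ is a weighted accumulation over $r$ tensors of size $p$, costing $\Theta(rp)$ --- and, crucially, \emph{independent of the sequence length}; write it as $r\,p$ (one multiply--add per expert per parameter). The output combination $\sum_i \alpha_i O_i$ with $O_i \in \mathbb{R}^{d_{model}\times k}$ costs $\Theta(rk\,d_{model}) \ge 0$, which I simply keep on the MoE side. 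It therefore suffices to establish $r\,p + C_{dec}(k) < r\,C_{dec}(k)$, i.e.
\begin{equation}
  r\,p \;<\; (r-1)\,C_{dec}(k) .
\end{equation}

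To finish, use $C_{dec}(k) > k\,p$: then $(r-1)\,C_{dec}(k) > (r-1)\,k\,p \ge r\,p$, the last step being the elementary inequality $(r-1)k \ge r$, which holds for every $r\ge 2$ and every $k\ge 2$ (it reads $k-2\ge 0$ when $r=2$, and is only slacker for larger $r$). Hence AoP is strictly cheaper than MoE as soon as $k\ge 2$. At $k=1$ the argument degenerates --- $(r-1)\,C_{dec}(1)$ sits only marginally above $(r-1)p < rp$ --- so AoP carries no advantage there, which is exactly why the statement is restricted to sequences longer than one.

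The one delicate point is \emph{strictness} in the corner $r=k=2$, where $(r-1)\,k\,p = r\,p$ with equality, so the linear term alone no longer leaves any slack; there the strict inequality must be inherited from $C_{dec}(k) > k\,p$ --- that is, from the fact that a decoder pass does strictly more than read each parameter once per position (the attention FLOPs). Making that airtight amounts to a careful but routine FLOP count of a single $\textrm{TRS}_{dec}$ pass; everything else is cancellation plus the elementary inequality above.
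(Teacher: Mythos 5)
Your argument is correct and is essentially the paper's own proof in a slightly more abstract dress: after cancelling the shared encoder and gating work, both reduce the comparison to ``parameter sum (cost $rp$, length-independent) plus one decoder pass'' versus ``$r$ decoder passes plus output mixing,'' and both close with the same elementary inequality $rk \ge r+k$ (your $(r-1)k \ge r$) for $r\ge 2$, $k\ge 2$. The only real difference is presentational: the paper instantiates each expert as a single affine map $W\in\mathbb{R}^{d\times n}$, counts $rtdn+rtn$ versus $(r+t)dn$, and relegates the observation that richer experts (e.g.\ with attention) only favor AoP further to a closing remark, whereas you promote that observation to the load-bearing lower bound $C_{dec}(k)>kp$, which also supplies the strictness at the corner $r=k=2$ that the paper instead gets from its $rtn$ output-summing term.
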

\begin{proof}
Let $f_\theta: \mathbb{R}^d \rightarrow \mathbb{R}^n$ a generic function parametrized by $\theta$. Without loss of generality, we define $\theta$ as a affine transformation $W \in  \mathbb{R}^{d \times n}$. Let $X \in  \mathbb{R}^{t \times d}$ a generic input sequence of length $t$ and $d$ dimensional size. Let the set $F=[f_{\theta_1},\cdots,f_{\theta_r}]$ be the set of $r$ experts. Hence, the operation done by MoE are:
\begin{equation}
    \textrm{MoE}(X)= f_{\theta_1}(X) + \cdots + f_{\theta_r}(X)= XW_1 + \cdots + XW_r
\end{equation}
Thus the computational cost in term of operation is $\mathcal{O}(rtdn + rtn)$ since the cost of $ f_{\theta_i}(X)$ is $\mathcal{O}(tdn)$ and it is repeated $r$ times, and the cost of summing the representation is $\mathcal{O}(rtn)$. On the other hand, the operation done by AoP are:
\begin{gather}
    \theta^* = \theta_1 + \cdots + \theta_r = W_1 + \cdots + W_r\\
    \textrm{AoP}(X) = f_{\theta^*}(X) =  XW^*
\end{gather}
in this case the computational cost in term of operation is $\mathcal{O}((r+t)dn)$ since the cost of summing the parameters is $\mathcal{O}(rdn)$ and the cost of $f_{\theta^*}$ is $\mathcal{O}(tdn)$. Hence, it is easy to verify that if $t>1$ then:
\begin{equation}
    rtdn + rtn \geq (rt)dn \geq (r+t)dn
\end{equation}
Furthermore, the assumption of using a simple affine transformation $W$ is actually an optimal case. Indeed, assuming that the cost of parameters sum is equal to the number of operation is optimistic, for instance already by using attention the number of operations increases but the number of parameters remains constant.  
\end{proof}

Importantly, if we apply $\alpha$ to each of the output representation $O^i$ generated by the TRS$_{dec}^i$, we end up having a Transformer-based implementation of MoE. We call this model as Attention over Representation (AoR). Finally, an additional loss term is used to supervise the attention vector $\alpha$ by using the prior knowledge vector $V$. Since multiple decoder parameters can be selected at the same time, we use a binary cross-entropy to train each $\alpha_i$. Thus a second loss is defined as: 
\begin{equation}
\mathcal{L}_{V} =- \sum_{i=1}^{r} v_{i} \times \text{log} \sigma(qK)_{i} + (1-v_{i}) \times \text{log} (1- \sigma(q K)_{i}) \nonumber
\end{equation}
The final loss is the summation of $\mathcal{L}_{P(Y|X)}$ and $\mathcal{L}_{V}$. 

Finally, in AoP, but in general in the MoE framework, stacking multiple layers (e.g., Transformer) leads to models with a large number of parameters, since multiple experts are repeated across layers. An elegant workaround is the Universal Transformer~\citep{dehghani2018universal}, which loops over an \textit{unique} layer and, as shown by~\citep{dehghani2018universal}, holds similar or better performance than a multi-layer Transformer. In our experiment, we report a version of AoP that uses this architecture, which for instance does not add any further parameter to the model.

\section{Experiments and Results}
\subsection{Dataset} 
To evaluate the performance of our model for different conversational skills, we propose to combine three publicly available datasets: MultiWOZ~\citep{budzianowski2018multiwoz}, Stanford Multi-domain Dialogue~\citep{ericKVR2017} and Persona-Chat~\citep{personachat}. 

\begin{wrapfigure}{r}{0.5\textwidth}
\caption{Datasets statistics}
\label{data_stat}
\begin{tabular}{r|ccc}
\hline
\multicolumn{1}{l|}{} & \textbf{SMD} & \textbf{MWOZ} & \textbf{Persona} \\ \hline
\textit{\#Dialogues} & 2425 & 8,438 & 12,875 \\ \hline
\textit{\#turns} & 12,732 & 115,424 & 192,690 \\ \hline
\textit{Avg. turns} & 5.25 & 13.68 & 14.97 \\ \hline
\textit{Avg. tokens} & 8.02 & 13.18 & 11.96 \\ \hline
\textit{Vocab} & 2,842 & 24,071 & 20,343 \\ \hline
\end{tabular}
\makeatletter\def\@captype{table}\makeatother
\end{wrapfigure}
\paragraph{MultiWOZ (MWOZ)} is a human-to-human multi-domain goal-oriented dataset annotated with dialogue acts and states. In this dataset, there are seven domains (i.e., Taxi, Police, Restaurant, Hospital, Hotel, Attraction, Train) and two APIs interfaces: SQL and BOOK. The former is used to retrieve information about a certain domain and the latter is used to book restaurants, hotels, trains, and taxis. 
We refine this dataset to include SQL/BOOK queries and their outputs using the same annotations schema as~\citep{bordes2016learning}.

Hence, each response can either be plain text conversation with the user or SQL/BOOK queries, and the memory is dynamically populated with the results from the queries as the generated response is based on such information. 
This transformation allows us to train end-to-end models that learns how and when to produce SQL queries, to retrieve knowledge from a dynamic memory, and to produce plain text response.
A detailed explanation is reported in Appendix A3, together with some samples. 

\paragraph{Stanford Multi-domain Dialogue (SMD)}is another human-to-human multi-domain goal-oriented dataset that is already designed for end-to-end training. There are three domains in this dataset (i.e., Point-of-Interest, Weather, Calendar). The difference between this dataset and MWOZ is that each dialogue is associated with a set of records relevant to the dialogues. The memory is fixed in this case so the model does not need to issue any API calls. However, retrieving the correct entities from the memory is more challenging as the model has to compare different alternatives among records.

\paragraph{Persona-Chat}is a multi-turn conversational dataset, in which two speakers are paired and different persona descriptions (4-5 sentences) are randomly assigned to each of them. For example, ``\textit{I am an old man}'' and ``\textit{I like to play football}'' are one of the possible persona descriptions provided to the system. Training models using this dataset results in a more persona consistent and fluent conversation compared to other existing datasets~\citep{personachat}. Currently, this dataset has become one of the standard benchmarks for chit-chat systems, thus, we include it in our evaluation.

For all three datasets, we use the training/validation/test split provided by the authors and we keep all the \textit{real entities} in input instead of using their delexicalized version as in~\citep{budzianowski2018multiwoz,ericKVR2017}. This makes the task more challenging, but at the same time more interesting since we force the model to produce real entities instead of generic and frequent placeholders. Table~\ref{data_stat} summarizes the dataset statistics in terms of number of dialogues, turns, and unique tokens. Finally, we merge the three datasets obtaining 154,768/19,713/19,528 for training, validation and, test respectively, and a vocabulary size of 37,069 unique tokens.

\subsection{Evaluation Metrics}
\paragraph{Goal-Oriented} For both MWOZ and SMD, we follow the evaluation done by existing works~\citep{eric-manning:2017:EACLshort,zhao2017generative,madotto2018mem2seq,wu2017dstc6}. We use BLEU\footnote{Using the \texttt{multi-bleu.perl} script} score~\citep{pAPIneniBLEU2002} to measure the response fluency and Entity F1-Score~\citep{wen2016network,zhao2017generative} to evaluates the ability of the model to generate relevant entities from the dynamic memory. Since MWOZ also includes SQL and BOOK queries, we compute the exact match accuracy (i.e., $ACC_{SQL}$ and $ACC_{BOOK}$) and BLEU score (i.e., $BLEU_{SQL}$ and $BLEU_{BOOK}$). Furthermore, we also report the F1-score for each domain in both MWOZ and SMD. 

\paragraph{Chit-Chat} We compare perplexity, BLEU score, F1-score~\citep{dinan2019second}, and Consistency score of the generate sentences with the human-generated prediction. The Consistency score is computed using a Natural Language Inference (NLI) model trained on dialogue NLI~\citep{dnli}, a recently proposed corpus based on Persona dataset. We fine-tune a pre-trained BERT model~\citep{devlin2018bert} using the dialogue DNLI corpus and achieve a test set accuracy of 88.43\%, which is similar to the best-reported model in~\citep{dnli}. The consistency score is defined as follow:
\begin{align}
    &\text{\textbf{NLI}}(u, p_j) = \Bigg\{
  \begin{array}{rcr}
    1 & \text{if $u$ entails $p_j$} \\
    0 & \text{if $u$ is independent to $p_j$} \\
    -1 & \text{if $u$ contradicts $p_j$} \\
  \end{array} \nonumber
  \\
  &\text{\textit{\textbf{C}}}(u) = \sum_j^{m} \text{\textbf{NLI}}(u, p_j)
\end{align}
where $u$ is a generated utterance and $p_j$ is one sentence in the persona description. In~\cite{dnli,madotto2019personalizing}, the authors showed that by re-ranking the beam search hypothesis using the DNLI score (i.e., \textit{\textbf{C}} score), they achieved a substantial improvement in dialogue consistency. Intuitively, having a higher consistency \textit{\textbf{C}} score means having a more persona consistent dialogue response.

\subsection{Baselines}
In our experiments, we compare Sequence-to-Sequence (\textit{Seq2Seq})~\citep{see-liu-manning:2017:Long}, Transformer (\textit{TRS})~\citep{vaswani2017attention}, Mixture of Expert (\textit{MoE})~\citep{shazeer2017outrageously} and Attention over Representation (\textit{AoR}) with our proposed Attention over Parameters (\textit{AoP}). In all the models, we used the same copy-mechanism as in~\citep{see-liu-manning:2017:Long}. In \textit{AoR} instead of mixing the parameters as in Equation \ref{eq:aop}, we mix the output representation of each transformer decoder (i.e.  Equation \ref{eq_dec}). For all \textit{AoP}, \textit{AoR}, and \textit{MoE}, $r=13$ is the number of decoders (experts): 2 skills of SQL and BOOK, 10 different domains for MWOZ+SMD, and 1 for Persona-Chat. Furthermore, we include also the following experiments: \textit{AoP} that uses the gold attention vector $V$, which we refer as \textit{AoP} w/ Oracle (or \textit{AoP} + O); \textit{AoP} trained by removing the $\mathcal{L}_{V}$ from the optimization (\textit{AoP w/o $\mathcal{L}_{V}$}); and as aforementioned, the Universal Transformer for both AoP (\textit{AoP + U}) and the standard Transformer (\textit{TRS + U}) (i.e., 6 hops). All detailed model description and the full set of hyper-parameters used in the experiments are reported in Appendix A4. 

\subsection{Results} 

Table~\ref{mwoz-smd} and Table~\ref{persona} show the respectively evaluation results in MWOZ+SMD and Persona-Chat datasets.

\begin{wrapfigure}{r}{0.5\textwidth}
\vspace{-20pt}
\caption{Results for the Persona-Chat dataset.} \label{persona}
\begin{tabular}{r|cccc}
\hline
\multicolumn{1}{c|}{\textbf{Model}} & \textbf{Ppl.} & \textbf{F1} & \textbf{C} & \textbf{BLEU} \\ \hline
\textit{Seq2Seq}         & 39.42  &	6.33  &	0.11  &	2.79  \\ \hline
\textit{TRS}             & 43.12  &	7.00  &	0.07  &	2.56  \\ \hline
\textit{MoE}             & \textbf{38.63}  & \textbf{7.33}  &	0.19  &	2.92  \\ \hline
\textit{AoR}            & 40.18   & 6.66  & 0.12  & 2.69  \\ \hline
\textit{AoP}       & 39.14  &	7.00  &	\textbf{0.21}  &	\textbf{3.06}  \\ \hline\hline   
\textit{TRS + U}         & 43.04  &	\textbf{7.33}  & 0.15  &	2.66  \\ \hline
\textit{AoP + U}         & \textbf{37.40}  &	7.00  &	\textbf{0.29}  &	\textbf{3.22}  \\ \hline \hline 
\textit{AoP w/o $\mathcal{L}_{V}$}       & 42.81  &	6.66  &	0.12  &	2.85  \\ \hline
\textit{AoP + O}         & \textit{40.16}  &	\textit{7.33}  &	\textit{0.21}  &	\textit{2.91}  \\ \hline 
\end{tabular}
\end{wrapfigure}
From Table~\ref{mwoz-smd}, we can identify four patterns. 1) \textit{AoP} and \textit{AoR} perform consistently better then other baselines which shows the effectiveness of combining parameters by using the correct prior $V$; 2) \textit{AoP} performs consistently, but marginally, better than \textit{AoR}, with the advantage of an algorithmic faster inference; 3) Using Oracle (\textit{AoP}+O) gives the highest performance in all the measures, which shows the performance upper-bound for \textit{AoP}. Hence, the performance gap when not using oracle attention is most likely due to the error in attention $\alpha$ (i.e., 2\% error rate). Moreover, Table~\ref{mwoz-smd} shows that by removing $\mathcal{L}_{V}$ (\textit{AoP w/o $\mathcal{L}_{V}$}) the model performance decreases, which confirms that good inductive bias is important for learning how to select and combine different parameters (experts). Additionally, in Appendix A5, we report the per-domain F1-Score for SQL, BOOK and sentences, and Table~\ref{persona} and Table~\ref{mwoz-smd} with the standard deviation among the three runs. 

Furthermore, from Table~\ref{persona}, we can notice that \textit{MoE} has the lowest perplexity and F1-score, but \textit{AoP} has the highest Consistency and BLUE score. Notice that the perplexity reported in~\citep{personachat} is lower since the vocabulary used in their experiments is smaller. In general, the difference in performance among different models is marginal except for the Consistency score; thus, we can conclude that all the models can learn this skill reasonably well. Consistently with the previous results, when $\mathcal{L}_{V}$ is removed from the optimization, the models' performance decreases. 
\begin{table}[t]
\caption{Results for the goal-oriented responses in both MWOZ and SMD. Last raw, and italicized, are the Oracle results, and bold-faced are best in each setting (w and w/o Universal). Results are averaged among three run (full table in Appendix A6).} \label{mwoz-smd}
\centering
\resizebox{0.9\textwidth}{!}{%
\begin{tabular}{r|cc|cc|cc}
\hline
\multicolumn{1}{c|}{\textbf{Model}} & \textbf{F1} & \textbf{BLEU} & \textbf{SQL$_{Acc}$} & \textbf{SQL$_{BLEU}$} & \textbf{BOOK$_{Acc}$} & \textbf{BOOK$_{BLEU}$} \\ \hline
\textit{Seq2Seq}        & 38.37  &	9.42     & 49.97  &	81.75  & 39.05  & 79.00  \\ \hline
\textit{TRS}            & 36.91  &	9.92   & 61.96  &	89.08  & 46.51  & 78.41  \\ \hline
\textit{MoE}            & 38.64  &	9.47    & 53.60  &	85.38  & 37.23  & 78.55  \\ \hline
\textit{AoR}        & 40.36  &	10.66  &	69.39  &	90.64  &	52.15  &	81.15              \\ \hline
\textit{AoP}  & \textbf{42.26}  &	\textbf{11.14}   &\textbf{71.1}  &	\textbf{90.90}  & \textbf{56.31}  & \textbf{84.08}  \\ \hline \hline
\textit{TRS + U}        & 39.39  &9.29	  & 61.80 &89.70	& 50.16 & 79.05  \\ \hline
\textit{AoP + U}        & \textbf{44.04}  &	\textbf{11.26}  &	\textbf{74.83}  &	\textbf{91.90}  &	\textbf{56.37}  &	\textbf{84.15}              \\ \hline \hline
\textit{AoP w/o $\mathcal{L}_{V}$}        & 38.50  &	10.50     & 61.47  &	88.28  & 52.61  & 80.34  \\ \hline
\textit{AoP+O}   & \textit{46.36}  &	\textit{11.99}   & \textit{73.41}  &	\textit{93.81}  & \textit{56.18} & \textit{86.42}  \\ \hline
\end{tabular}
}
\end{table}

Finally, in both Table~\ref{mwoz-smd} and Table~\ref{persona}, we report the results obtained by using the Universal Transformer, for both AoP and the Transformer. By adding the layer recursion, both models are able to consistently improve all the evaluated measures, in both Persona-Chat and the Task-Oriented tasks. Especially AoP, which achieves better performance than Oracle (i.e. single layer) in the SQL accuracy, and a consistently better performance in the Persona-Chat evaluation.



\section{Skill Composition}


To demonstrate the effectiveness of our model in learning independent skills and composing them together, we \textit{manually trigger} skills by modifying $\alpha$ and generate 14 different responses for the same input dialogue context. This experiment allows us to verify whether the model accurately captures the meaning of each skill and whether it can properly learn to compose the selected parameters (skills). Table~\ref{Composit} first shows the dialogue history along with the response of \textit{AoP} on the top, and then different responses generated by modifying $\alpha$ (i.e., black cells correspond to 1 in the vector, while the whites are 0). By analyzing Table~\ref{Composit}~\footnoteref{note1} we can notice that:

\begin{itemize}[leftmargin=*]
    \item The model learns the correct semantics of each skill. For instance, the \textit{AoP} response is of type SQL and Train, and by deactivating the SQL skill and activating other domain-skills, including Train, we can see that the responses are grammatical and they are coherent with the selected skill semantics. For instance, by just selecting \textit{Train}, the generated answer becomes \textit{``what time would you like to leave?''} which is coherent with the dialogue context since such information has not been yet provided. Interestingly, when \textit{Persona} skill is selected, the generated response is conversational and also coherent with the dialogue, even though it is less fluent.
    
    \item The model effectively learns how to compose multiple skills. For instance, when SQL or BOOK are triggered the response produces the correct SQL-syntax (e.g. ``SELECT * FROM ..'' etc.). By also adding the corresponding domain-skill, the model generates the correct query format and attributes relative to the domain type (e.g. in \textit{SQL, Restaurant}, the model queries with the relevant attribute \textit{food} for restaurants). 
\end{itemize}

\section{Related Work}
\paragraph{Dialogue} Task-oriented dialogue models~\citep{gao2018neural} can be categorized in two types: module-based~\citep{williams2007partially,hori2009statistical,lee2009example, levin2000stochastic,young2013pomdp,wu2019transferable} and end-to-end. In this paper, we focus on the latter which are systems that train a single model directly on text transcripts of dialogues. These tasks are tackled by selecting a set of predefined utterances~\citep{bordes2016learning,perez2016gated,williams2017hybrid,seo2016query} or by generating a sequence of tokens~\citep{wen2016network,serban2016building,zhao2017generative,serban2017hierarchical}. Especially in the latter, copy-augmented models~\citep{eric-manning:2017:EACLshort,reddy2018multi,yavuzdeepcopy} are very effective since extracting entities from a knowledge base is fundamental. On the other hand, end-to-end open domain chit-chat models have been widely studied~\citep{serban2016generative,vinyals2015neural,wolf2019transfertransfo,lin2019moel,lin2019caire}.
Several works improved on the initially reported baselines with various methodologies~\citep{kulikov2018importance,yavuzdeepcopy,hancock2019learning, zemlyanskiy2018aiming, dinan2019second}. Finally,~\citep{zhao2017generative} was the first attempt of having an end-to-end system for both task-oriented models and chit-chat. However, the dataset used for the evaluation was small, evaluated only in single domain, and the chit-chat ability was added manually through rules. 

\begin{table}[t]
\caption{Selecting different skills thought the attention vector $\alpha$ results in a skill-consistent response. \textit{AoP} response activates SQL and Train.}\label{Composit}
\centering
\resizebox{\textwidth}{!}{%
\begin{tabular}{llllllllll}
\hline
\multicolumn{10}{l}{\begin{tabular}[l]{@{}l@{}}\textbf{Sys}: There are lots of trains to choose from! Where are you departing from?\\ \textbf{Usr}: I am departing from london heading to cambridge.\\ \textbf{Sys}: What time will you be travelling?\\ \textbf{Usr}: I need to arrive by 1530.\\ \textbf{AoP}: SELECT * FROM train WHERE destination=``cambridge'' AND\\ \qquad \ \ day=``monday'' AND arriveBy < ``1530'' and departure=``london''\end{tabular}} \\ \hline
\multicolumn{1}{|c|}{\rotatebox{90}{SQL}} & \multicolumn{1}{c|}{\rotatebox{90}{BOOK}} & \multicolumn{1}{c|}{\rotatebox{90}{Taxi}} & \multicolumn{1}{c|}{\rotatebox{90}{Restaurant}} & \multicolumn{1}{c|}{\rotatebox{90}{Hotel}} & \multicolumn{1}{c|}{\rotatebox{90}{Attraction}} & \multicolumn{1}{c|}{\rotatebox{90}{Train}} & \multicolumn{1}{c|}{\rotatebox{90}{Schedule}} & \multicolumn{1}{c|}{\rotatebox{90}{Persona}} & \multicolumn{1}{c|}{\textit{AoP} responses using different parameter combination} \\ \hline\hline
\multicolumn{1}{|l|}{} & \multicolumn{1}{l|}{} & \multicolumn{1}{l|}{\cellcolor[HTML]{000000}} & \multicolumn{1}{l|}{} & \multicolumn{1}{l|}{} & \multicolumn{1}{l|}{} & \multicolumn{1}{l|}{} & \multicolumn{1}{l|}{} & \multicolumn{1}{l|}{} & \multicolumn{1}{l|}{I would be happy to book that for you.} \\ \hline\hline
\multicolumn{1}{|l|}{} & \multicolumn{1}{l|}{} & \multicolumn{1}{l|}{} & \multicolumn{1}{l|}{\cellcolor[HTML]{000000}} & \multicolumn{1}{l|}{} & \multicolumn{1}{l|}{} & \multicolumn{1}{l|}{} & \multicolumn{1}{l|}{} & \multicolumn{1}{l|}{} & \multicolumn{1}{l|}{I have found you a few to choose from. what time would ..} \\ \hline\hline
\multicolumn{1}{|l|}{} & \multicolumn{1}{l|}{} & \multicolumn{1}{l|}{} & \multicolumn{1}{l|}{} & \multicolumn{1}{l|}{\cellcolor[HTML]{000000}} & \multicolumn{1}{l|}{} & \multicolumn{1}{l|}{} & \multicolumn{1}{l|}{} & \multicolumn{1}{l|}{} & \multicolumn{1}{l|}{I have many options for you. is there a certain time you ...} \\ \hline\hline
\multicolumn{1}{|l|}{} & \multicolumn{1}{l|}{} & \multicolumn{1}{l|}{} & \multicolumn{1}{l|}{} & \multicolumn{1}{l|}{} & \multicolumn{1}{l|}{\cellcolor[HTML]{000000}} & \multicolumn{1}{l|}{} & \multicolumn{1}{l|}{} & \multicolumn{1}{l|}{} & \multicolumn{1}{l|}{Is there a certain area you would like?} \\ \hline\hline
\multicolumn{1}{|l|}{} & \multicolumn{1}{l|}{} & \multicolumn{1}{l|}{} & \multicolumn{1}{l|}{} & \multicolumn{1}{l|}{} & \multicolumn{1}{l|}{} & \multicolumn{1}{l|}{\cellcolor[HTML]{000000}} & \multicolumn{1}{l|}{} & \multicolumn{1}{l|}{} & \multicolumn{1}{l|}{what time would you like to leave?} \\ \hline\hline
\multicolumn{1}{|l|}{} & \multicolumn{1}{l|}{} & \multicolumn{1}{l|}{} & \multicolumn{1}{l|}{} & \multicolumn{1}{l|}{} & \multicolumn{1}{l|}{} & \multicolumn{1}{l|}{} & \multicolumn{1}{l|}{\cellcolor[HTML]{000000}} & \multicolumn{1}{l|}{} & \multicolumn{1}{l|}{okay I have two trains what time would you like to cambridge?} \\ \hline\hline
\multicolumn{1}{|l|}{} & \multicolumn{1}{l|}{} & \multicolumn{1}{l|}{} & \multicolumn{1}{l|}{} & \multicolumn{1}{l|}{} & \multicolumn{1}{l|}{} & \multicolumn{1}{l|}{} & \multicolumn{1}{l|}{} & \multicolumn{1}{l|}{\cellcolor[HTML]{000000}{\color[HTML]{000000} }} & \multicolumn{1}{l|}{Where do you do for work ?} \\ \hline\hline
\multicolumn{1}{|l|}{} & \multicolumn{1}{l|}{\cellcolor[HTML]{000000}} & \multicolumn{1}{l|}{\cellcolor[HTML]{000000}} & \multicolumn{1}{l|}{} & \multicolumn{1}{l|}{} & \multicolumn{1}{l|}{} & \multicolumn{1}{l|}{} & \multicolumn{1}{l|}{} & \multicolumn{1}{l|}{} & \multicolumn{1}{l|}{BOOK FROM taxi WHERE leaveAt>``1530'' AND destination=..} \\ \hline\hline
\multicolumn{1}{|l|}{} & \multicolumn{1}{l|}{\cellcolor[HTML]{000000}} & \multicolumn{1}{l|}{} & \multicolumn{1}{l|}{\cellcolor[HTML]{000000}} & \multicolumn{1}{l|}{} & \multicolumn{1}{l|}{} & \multicolumn{1}{l|}{} & \multicolumn{1}{l|}{} & \multicolumn{1}{l|}{} & \multicolumn{1}{l|}{BOOK FROM restaurant WHERE time=``1530'' AND ...} \\ \hline\hline
\multicolumn{1}{|l|}{} & \multicolumn{1}{l|}{\cellcolor[HTML]{000000}} & \multicolumn{1}{l|}{} & \multicolumn{1}{l|}{} & \multicolumn{1}{l|}{\cellcolor[HTML]{000000}} & \multicolumn{1}{l|}{} & \multicolumn{1}{l|}{} & \multicolumn{1}{l|}{} & \multicolumn{1}{l|}{} & \multicolumn{1}{l|}{BOOK FROM hotel WHERE people=``1'' AND day=``monday''} \\ \hline\hline
\multicolumn{1}{|l|}{} & \multicolumn{1}{l|}{\cellcolor[HTML]{000000}} & \multicolumn{1}{l|}{} & \multicolumn{1}{l|}{} & \multicolumn{1}{l|}{} & \multicolumn{1}{l|}{} & \multicolumn{1}{l|}{\cellcolor[HTML]{000000}} & \multicolumn{1}{l|}{} & \multicolumn{1}{l|}{} & \multicolumn{1}{l|}{BOOK FROM train WHERE people=``1'' AND id\_booking='} \\ \hline\hline
\multicolumn{1}{|l|}{\cellcolor[HTML]{000000}} & \multicolumn{1}{l|}{} & \multicolumn{1}{l|}{} & \multicolumn{1}{l|}{\cellcolor[HTML]{000000}} & \multicolumn{1}{l|}{} & \multicolumn{1}{l|}{} & \multicolumn{1}{l|}{} & \multicolumn{1}{l|}{} & \multicolumn{1}{l|}{} & \multicolumn{1}{l|}{SELECT * FROM restaurant WHERE food=``1530'' ...} \\ \hline\hline
\multicolumn{1}{|l|}{\cellcolor[HTML]{000000}} & \multicolumn{1}{l|}{} & \multicolumn{1}{l|}{} & \multicolumn{1}{l|}{} & \multicolumn{1}{l|}{\cellcolor[HTML]{000000}} & \multicolumn{1}{l|}{} & \multicolumn{1}{l|}{} & \multicolumn{1}{l|}{} & \multicolumn{1}{l|}{} & \multicolumn{1}{l|}{SELECT * FROM hotel WHERE type=``1530''} \\ \hline\hline
\multicolumn{1}{|l|}{\cellcolor[HTML]{000000}} & \multicolumn{1}{l|}{} & \multicolumn{1}{l|}{} & \multicolumn{1}{l|}{} & \multicolumn{1}{l|}{} & \multicolumn{1}{l|}{\cellcolor[HTML]{000000}} & \multicolumn{1}{l|}{} & \multicolumn{1}{l|}{} & \multicolumn{1}{l|}{} & \multicolumn{1}{l|}{SELECT * FROM attraction WHERE name=``departure''} \\ \hline \hline
\end{tabular}}
\end{table}

\paragraph{Mixture of Expert \& Conditional Computation}
The idea of having specialized parameters, or so-called experts, has been widely studied topics in the last two decades~\citep{jacobs1991adaptive,jordan1994hierarchical}. For instance, different architecture and methodologies have been used such as Gaussian Processes~\citep{tresp2001mixtures}, Hierarchical Experts~\citep{yao2009hierarchical}, and sequential expert addition~\citep{aljundi2017expert}. More recently, the Mixture Of Expert~\citep{shazeer2017outrageously,kaiser2017one} model was proposed which added a large number of experts between two LSTMs. To the best of our knowledge, none of these previous works applied the results of the gating function to the parameters itself. On the other hand, there are Conditional Computational models which learn to dynamically select their computation graph~\citep{bengio2013estimating,davis2013low}. Several methods have been used such as reinforcement learning~\citep{bengio2016conditional}, a halting function~\citep{graves2016adaptive,dehghani2018universal,figurnov2017spatially}, by pruning~\citep{lin2017runtime,he2018amc} and routing/controller function~\citep{rosenbaum2018routing}. However, this line of work focuses more on optimizing the inference performance of the model more than specializing parts of it for computing a certain task.

\paragraph{Multi-task Learning}
Even though our model processes only input sequence and output sequences of text, it actually jointly learns multiple tasks (e.g. SQL and BOOK query, memory retrieval, and response generation), thus it is also related to multi-task learning~\citep{caruana1997multitask}. Interested readers may refer to~\citep{ruder2017overview,zhou2011clustered} for a general overview on the topic. In Natural Language Processing, multi-task learning has been applied in a wide range of applications such as parsing~\citep{collobert2011natural,hashimoto2017joint,ruder2017learning}, machine translation in multiple languages~\citep{johnson2017google}, and parsing image captioning and machine translation~\citep{44928}. More interestingly, DecaNLP~\citep{mccann2018natural} has a large set of tasks that are cast to question answering (QA), and learned by a single model. In this work, we focus more on conversational data, but in future works, we plan to include these QA tasks.

\section{Conclusion}
In this paper, we propose a novel way to train a single end-to-end dialogue model with multiple composable and interpretable skills. Unlike previous work, that mostly focused on the representation-level mixing~\citep{shazeer2017outrageously}, our proposed approach, \textit{Attention over Parameters}, learns how to softly combine independent sets of specialized parameters (i.e., making SQL-Query, conversing with consistent persona, etc.) into a single set of parameters. By doing so, we not only achieve compositionality and interpretability but also gain algorithmically faster inference speed. To train and evaluate our model, we organize a multi-domain task-oriented datasets into end-to-end trainable formats and combine it with a conversational dataset (i.e. Persona-Chat). Our model learns to consider each task and domain as a separate skill that can be composed with each other, or used independently, and we verify the effectiveness of the interpretability and compositionality with competitive experimental results and thorough analysis.

Several extensions of this work are possible, for example: incremental learning and zero-shot skill composition. The first, would be similar to~\cite{rusu2016progressive} where we can add skills through time and learn how to combine it to existing ones. The second, instead, is more related to the semantic compositionality shown in the analysis, where each skill is correctly learned and can be apply to control the generation. An interesting direction would be to learn more general skills (e.g. Machine Translation (MT) or emotional responses), and being able to mix it to existing skills to obtain compositional responses without labeled data.

\bibliographystyle{unsrtnat}
\bibliography{neurips_2019}

\newpage
\appendix
\section{Appendix}

\subsection{Embedded Representation}
\begin{wrapfigure}{r}{0.5\textwidth}
    \vspace{-30pt}
  \begin{center}
    \includegraphics[width=\linewidth]{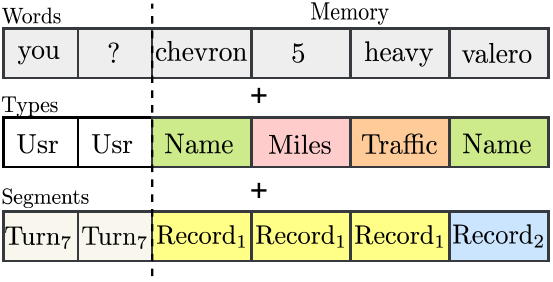}
  \end{center}
    \caption{Positional Embedding of the dialogue history and the memory content.}
    \label{example_position}
    \vspace{-10pt}
\end{wrapfigure}
Since the model input may include structured data (e.g. DB records) we further define another embedding matrix for encoding the types and the segments as $P\in \mathbb{R}^{d \times |S|}$ where $S$ is the set of positional tokens and $|S|$ its cardinality. $P$ is used to inform the model of the token types such as speaker information (e.g. \textit{Sys} and \textit{Usr}), the data-type for the memory content (e.g. \textit{Miles}, \textit{Traffic} etc.), and segment types like dialogue turn information and database record index~\citep{wolf2019transfertransfo}. Figure~\ref{example_position} shows an example of the embedded representation of the input. Hence, we denote $X_T$ and $X_R$ as the type and segment tokens for each token in input $X$, respectively.

\begin{figure}[t]
    \centering
    \includegraphics[width=0.98\linewidth]{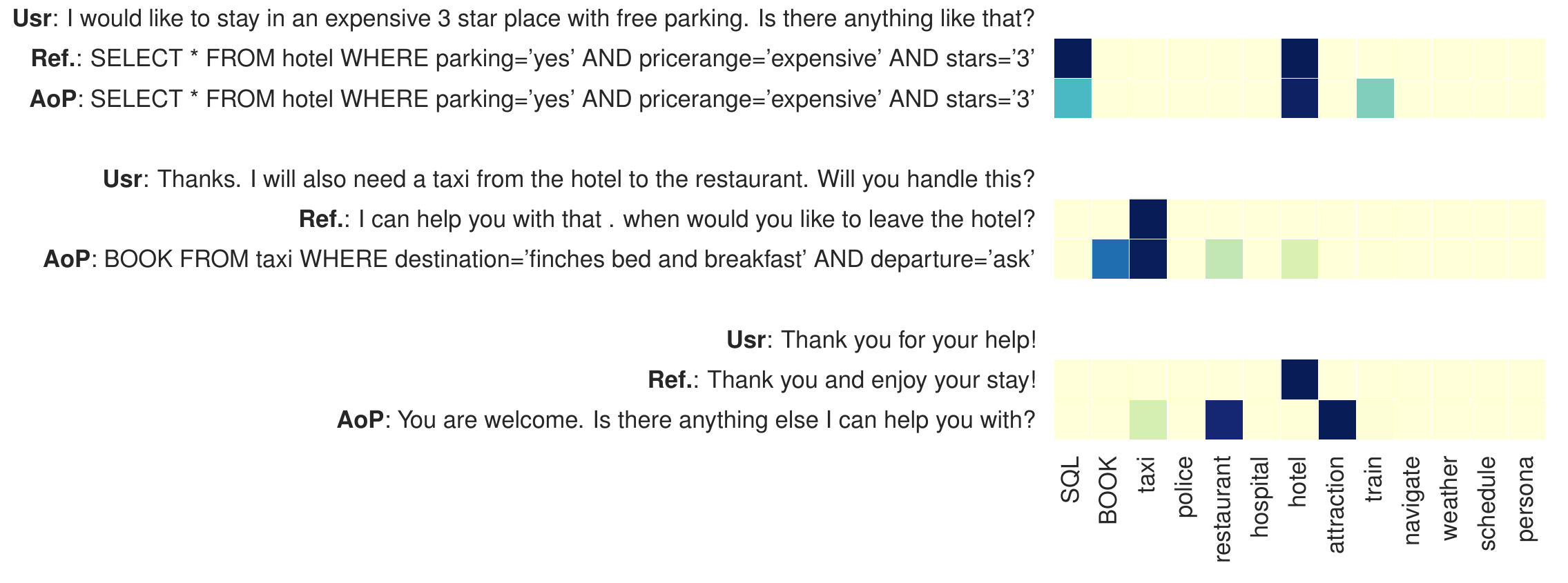}
    \caption{Attention over Parameters visualization, vector $\alpha$ for different reference (\textbf{Ref.}) and \textit{AoP} generated answers. Top rows (\textbf{Usr}) are the last utterances from each dialogue contexts.}
    \label{fig:Viz}
\end{figure}

\subsection{Attention Visualization} Figure~\ref{fig:Viz} shows the attention vector $\alpha$ over parameters for different generated sentences. 
In this figure, and by analyzing more examples~\footnote{\label{note1}Available in supplementary material and later online.}, we can identify two patterns: 
\begin{itemize}[leftmargin=*]
    \item \textit{AoP} learns to focus on the correct skills (i.e., SQL, BOOK) when API-calls are needed. From the first example in Figure~\ref{fig:Viz}, we can see that the activations in $\alpha$ are consistent with those in the correct attention vector $P$. There are also false positives, in which \textit{AoP} puts too high weights on BOOK when the correct response is plain text that should request more information from the user (i.e., \textit{i can help you with that. when would you like to leave the hotel?}). However, we can notice that this example is, in fact, "almost correct" as triggering a booking API call may also be considered a valid response. Meanwhile, the third example also fails to attend to the correct skill, but, in fact, generates a very fluent and relevant response. This is most likely because the answer is simple and generic.
    \item The attention often focuses on multiple skills not directly relevant to the task. We observe this pattern especially when there are other skill-related entities mentioned in the context or the response. For example, in the second dialog example in Figure~\ref{fig:Viz}, we can notice that \textit{AoP} not only accurately focuses on \textit{taxi} domain, but also has non-negligible activations for \textit{restaurant} and \textit{hotel}. This is because the words ``hotel" and ``restaurant" are both mentioned in the dialogue context and the model has to produce two entities of the same type (i.e. \textit{finches bed and breakfast} and \textit{ask}). 

\end{itemize}

\subsection{Data Pre-Processing}
As mentioned in the main article, we convert MultiWOZ into an end-to-end trainable dataset. This requires to add sql-syntax queries when the system includes particular entities. To do so we leverage two annotations such as the state-tracker and the speech acts. The first is used to generate the a well-formed query, including key and attribute, the second instead to decide when to include the query. More details on the dialogue state-tracker slots and slots value, and the different speech acts can be found in \citep{budzianowski2018multiwoz}. 

A query is create by the slots, and its values, that has been updated in the latest turn. The SQL query uses the following syntax:
\begin{equation}
    \textrm{\textbf{SELECT}} * \textrm{\textbf{FROM }} \textrm{\textit{domain}} \textrm{\textbf{ WHERE }} [\textrm{\textit{slot\_type}}=\textrm{\textit{slot\_value}}]^*  \nonumber
\end{equation}
Similarly for the booking api BOOK the syntax is the following:
\begin{equation}
    \textrm{\textbf{BOOK FROM }} \textrm{\textit{domain}} \textrm{\textbf{ WHERE }} [\textrm{\textit{slot\_type}}=\textrm{\textit{slot\_value}}]^*  \nonumber
\end{equation}
In both cases the slot values are kept as real entities. 

More challenging is to decide when to issue such apis. Speech acts are used to decide by using the "INFORM-DOMAIN" and "RECOMMEND-DOMAIN" tag. Thus any response that include those speech tag will trigger an api if and only if:
\begin{itemize}[leftmargin=*]
    \item there has been a change in the state-tracker from the previous turn
    \item the produced query has never been issued before
\end{itemize}
By a manual checking, this strategy results to be effective. However, as reported by \citep{budzianowski2018multiwoz} the speech act annotation includes some noise, which is reflected also into our dataset. 

The results from the SQL query can be of more that 1K records with multiple attributes. Following \citep{budzianowski2018multiwoz} we use the following strategy:
\begin{itemize}[leftmargin=*]
    \item If no speech act INFORM or RECOMMEND and the number of records are more than 5, we use a special token in the memory $<TM>$.
    \item If no speech act INFORM or RECOMMEND and the number of records are less or equal than 5, we put all the records in memory. 
    \item If any speech act INFORM or RECOMMEND, we filter the records to include based on the act value. Notice that this is a fair strategy, since all the resulting record are correct possible answers and the annotators pick-up on of the record randomly~\citep{budzianowski2018multiwoz}.
\end{itemize} 
 Notice that the answer of a booking call instead, is only one record containing the booking information (e.g. reference number, taxi plate etc.) or "Not Available" token in case the booking cannot made.
 
\subsection{Hyper-parameters and Training}
We used a standard Transformer architecture~\citep{vaswani2017attention} with pre-trained Glove embedding~\citep{pennington2014glove}. For the both Seq2Seq and MoE we use Adam~\citep{kingma2014adam} optimizer with a learning rate of $1\times10^{-3}$, where instead for the Transformer we used a warm-up learning rate strategy as in ~\citep{vaswani2017attention}. In both \textit{AoP} and \textit{AoR} we use an additional transformer layer on top the output of the model. Figure~\ref{fig:moe},\ref{fig:aor},\ref{fig:aop} shows the high level design MoE, AoR and AoP respectively. 
In all the model we used a batch size of 16, and we early stopped the model using the Validation set. All the experiments has been conducted using a single Nvidia 1080ti.

We used a small grid-search for tuning each model. The selected hyper-parameters are reported in Table \ref{Hyper-Paramer}, and we run each experiment 3 times and report the mean and standard deviation of each result.
\begin{table}[ht]
\caption{Hyper-Parameters used for the evaluations.}
\label{Hyper-Paramer}
\resizebox{\textwidth}{!}{
\begin{tabular}{r|cccccccc}
\hline
\multicolumn{1}{l|}{\textit{\textbf{Model}}} & \multicolumn{1}{l}{\textit{\textbf{d}}} & \multicolumn{1}{l}{\textit{\textbf{d$_{model}$}}} & \multicolumn{1}{l}{\textit{\textbf{Layers}}} & \multicolumn{1}{l}{\textit{\textbf{Head}}} & \multicolumn{1}{l}{\textit{\textbf{Depth}}} & \multicolumn{1}{l}{\textit{\textbf{Filter}}} & \multicolumn{1}{l}{\textit{\textbf{GloVe}}} & \multicolumn{1}{l}{\textit{\textbf{Experts}}} \\ \hline
\textit{Seq2Seq} & 100 & 100 & 1 & - & - & - & Yes & - \\ \hline
\textit{TRS} & 300 & 300 & 1 & 2 & 40 & 50 & Yes & - \\ \hline
\textit{MoE} & 100 & 100 & 2 & - & - & - & Yes & 13 \\ \hline
\textit{AoP/AoR}& 300 & 300 & 1 & 2 & 40 & 50 & Yes & 13 \\\hline
\textit{TRS/AoP+U}& 300 & 300 & 6 & 2 & 40 & 50 & Yes & 13 \\\hline
\end{tabular}}
\end{table}

\newpage
\subsection{MWOZ and SMD with Std.}
\begin{table}[ht]
\centering
\resizebox{\textwidth}{!}{%
\begin{tabular}{r|cc|cc|cc}
\hline
\multicolumn{1}{c|}{\textbf{Model}} & \textbf{F1} & \textbf{BLEU} & \textbf{SQL$_{Acc}$} & \textbf{SQL$_{BLEU}$} & \textbf{BOOK$_{Acc}$} & \textbf{BOOK$_{BLEU}$} \\ \hline
\textit{Seq2Seq}        & 38.37 $\pm$ 1.69 &	9.42 $\pm$ 0.38    & 49.97 $\pm$ 3.49 &	81.75 $\pm$ 2.54 & 39.05 $\pm$ 9.52 & 79.00 $\pm$ 3.63 \\ \hline
\textit{TRS}            & 36.91 $\pm$ 1.24 &	9.92 $\pm$ 0.43  & 61.96 $\pm$ 3.95 &	89.08 $\pm$ 1.23 & 46.51 $\pm$ 5.46 & 78.41 $\pm$ 2.03 \\ \hline
\textit{MoE}            & 38.64 $\pm$ 1.11 &	9.47 $\pm$ 0.59   & 53.60 $\pm$ 4.62 &	85.38 $\pm$ 2.68 & 37.23 $\pm$ 3.89 & 78.55 $\pm$ 2.62 \\ \hline
\textit{AoR}        & 40.36 $\pm$ 1.39 &	10.66 $\pm$ 0.34 &	69.39 $\pm$ 1.05 &	90.64 $\pm$ 0.83 &	52.15 $\pm$ 2.22 &	81.15 $\pm$ 0.32             \\ \hline
\textit{AoP}  & \textbf{42.26} $\pm$ 2.39 &	\textbf{11.14} $\pm$ 0.39  &\textbf{71.1} $\pm$ 0.47 &	\textbf{90.90} $\pm$ 0.81 & \textbf{56.31} $\pm$ 0.46 & \textbf{84.08} $\pm$ 0.99 \\ \hline \hline
\textit{TRS + U}        & 39.39 $\pm$ 1.23 &9.29	$\pm$  0.71  & 61.80$\pm$ 4.82 &89.70	$\pm$ 1.40& 50.16$\pm$1.18 & 79.05$\pm$ 1.42 \\ \hline
\textit{AoP + U}        & \textbf{44.04} $\pm$ 0.92 &	\textbf{11.26} $\pm$ 0.07 &	\textbf{74.83} $\pm$ 0.79 &	\textbf{91.90} $\pm$ 1.03 &	\textbf{56.37} $\pm$ 0.92 &	\textbf{84.15} $\pm$ 0.32             \\ \hline \hline
\textit{AoP w/o $\mathcal{L}_{V}$}        & 38.50 $\pm$ 1.15 &	10.50 $\pm$ 0.55    & 61.47 $\pm$ 0.15 &	88.28 $\pm$ 0.50 & 52.61 $\pm$ 0.56 & 80.34 $\pm$ 0.21 \\ \hline
\textit{AoP+O}   & \textit{46.36} $\pm$ 0.92 &	\textit{11.99} $\pm$ 0.03  & \textit{73.41} $\pm$ 0.59 &	\textit{93.81} $\pm$ 0.16 & \textit{56.18}$\pm$ 1.55 & \textit{86.42} $\pm$ 0.92 \\ \hline
\end{tabular}
}
\end{table}

\subsection{Persona Result with Std}
\centering
\begin{tabular}{r|cccc}
\hline

\textbf{Model} & \textbf{Ppl.} & \textbf{F1} & \textbf{C} & \textbf{BLEU} \\ \hline
\textit{Seq2Seq}         & 39.42 $\pm$ 1.54 &	6.33 $\pm$ 0.58 &	0.11 $\pm$ 0.06 &	2.80 $\pm$ 0.09 \\ \hline
\textit{TRS}             & 43.12 $\pm$ 1.46 &	7.00 $\pm$ 0.00 &	0.07 $\pm$ 0.16 &	2.56 $\pm$ 0.07 \\ \hline
\textit{MoE}             & \textbf{38.63} $\pm$ 0.20 &   \textbf{7.33} $\pm$ 0.05 &	0.19 $\pm$ 0.16 &	2.92 $\pm$ 0.48 \\ \hline
\textit{AoR}             & 40.18 $\pm$ 0.74 &	6.66 $\pm$ 0.05 &	0.12 $\pm$ 0.14 &	2.69 $\pm$ 0.34 \\ \hline
\textit{AoP}             & 39.14 $\pm$ 0.48 &	7.00 $\pm$ 0.00 &	\textbf{0.21} $\pm$ 0.05 &	\textbf{3.06} $\pm$ 0.08 \\ \hline \hline

\textit{TRS + U}         & 43.04$\pm$ 1.78 &	\textbf{7.33}$\pm$ 0.57 & 0.15$\pm$ 0.02 &	2.66$\pm$0.43  \\ \hline
\textit{AoP + U}         & \textbf{37.40}$\pm$0.08  &	7.00$\pm$0.00  &	\textbf{0.29}$\pm$ 0.07 &	\textbf{3.22}$\pm$ 0.04 \\ \hline  \hline
\textit{AoP w/o $\mathcal{L}_{V}$}       & 42.81$\pm$0.01  &	6.66$\pm$0.57  &	0.12$\pm$ 0.04 &	2.85$\pm$ 0.21 \\ \hline
\textit{AoP + O}         & \textit{40.16} $\pm$ 0.56 &	\textit{7.33} $\pm$ 0.58 &	\textit{0.21} $\pm$ 0.14 &	\textit{2.98} $\pm$ 0.05 \\ \hline
\end{tabular}

\subsection{Domain F1-Score}
\begin{table}[ht]
\centering
\begin{tabular}{rccccccc}
\hline
\multicolumn{1}{c|}{\textbf{Sentence}} & \textbf{Seq2Seq} & \textbf{MoE} & \textbf{TRS} & \textbf{AoR} &  \multicolumn{1}{c|}{\textbf{AoP}} & \textbf{Aop+O} \\ \hline
\multicolumn{1}{r|}{\textit{Taxi}} & 71.77 & 75.97 & 73.92 & 76.07  & \multicolumn{1}{c|}{\textbf{76.58}} & 78.30 \\ \hline
\multicolumn{1}{r|}{\textit{Police}} & 49.73 & 49.95 & 50.24 & 51.95  & \multicolumn{1}{c|}{\textbf{56.61}} & 52.05 \\ \hline
\multicolumn{1}{r|}{\textit{Restaurant}} & 50.20 & 49.59 & 48.34 & \textbf{50.58}  & \multicolumn{1}{c|}{50.47} & 50.90 \\ \hline
\multicolumn{1}{r|}{\textit{Hotel}} & \textbf{46.82} & 45.37 & 43.38 & 45.51  & \multicolumn{1}{c|}{46.40} & 44.47 \\ \hline

\multicolumn{1}{r|}{\textit{Attraction}} & 37.87 & 35.21 & 33.10 & 36.97  & \multicolumn{1}{c|}{\textbf{38.79}} & 37.51\\ \hline
\multicolumn{1}{r|}{\textit{Train}} & 46.02 & 41.72 & 41.28 & 44.33  & \multicolumn{1}{c|}{\textbf{46.32}} & 45.93 \\ \hline
\multicolumn{1}{r|}{\textit{Weather}} & 40.38 & 27.06 & 18.97 & 44.77  & \multicolumn{1}{c|}{\textbf{51.94}} & 55.23 \\ \hline
\multicolumn{1}{r|}{\textit{Schedule}} & 35.98 & 43.94 & 38.95 & 32.90 & \multicolumn{1}{c|}{\textbf{54.18}} & 52.99 \\ \hline
\multicolumn{1}{r|}{\textit{Navigate}} & 18.57 & \textbf{21.34} & 6.96 & 12.69  & \multicolumn{1}{c|}{12.18} & 16.56 \\ \hline
\multicolumn{7}{c}{\textbf{BOOK}} \\ \hline
\multicolumn{1}{r|}{\textit{Taxi}} & 23.16 & 32.28 & 30.70 & 41.93  & \multicolumn{1}{c|}{\textbf{46.66}} & 43.15 \\ \hline
\multicolumn{1}{r|}{\textit{Restaurant}} & 45.02 & 28.26 & 49.72 & 55.70  & \multicolumn{1}{c|}{\textbf{58.51}} & 57.70 \\ \hline
\multicolumn{1}{r|}{\textit{Hotel}} & 49.22 & 31.48 & 41.61 & 51.46 &  \multicolumn{1}{c|}{\textbf{56.62}} & 57.41 \\ \hline
\multicolumn{1}{r|}{\textit{Train}} & 55.86 & 56.38 & 57.51 & 57.51  & \multicolumn{1}{c|}{\textbf{59.15}} & 60.80 \\ \hline
\multicolumn{7}{c}{\textbf{SQL}} \\ \hline
\multicolumn{1}{r|}{\textit{Police}} & 81.33 & 0.00 & 90.66& 76.00  &  \multicolumn{1}{c|}{\textbf{93.33}} & 100.0 \\ \hline
\multicolumn{1}{r|}{\textit{Restaurant}} & 71.58 & 68.00 & 75.90 & \textbf{81.27}  & \multicolumn{1}{c|}{80.43} & 84.15 \\ \hline
\multicolumn{1}{r|}{\textit{Hospital}} & 62.22 & 15.55 & 58.89 & 71.11 &  \multicolumn{1}{c|}{\textbf{76.67}} & 83.33 \\ \hline
\multicolumn{1}{r|}{\textit{Hotel}} & 45.25 & 42.09 & 48.61 & 56.69 &  \multicolumn{1}{c|}{\textbf{59.75}} & 63.75 \\ \hline
\multicolumn{1}{r|}{\textit{Attraction}} & 65.48 & 67.69 & 65.91 & 70.61 & \multicolumn{1}{c|}{\textbf{76.22}} & 74.93 \\ \hline
\multicolumn{1}{r|}{\textit{Train}} & 30.02 & 41.01 & 55.67 & 66.61  & \multicolumn{1}{c|}{\textbf{67.34}} & 69.50 \\ \hline
\end{tabular}
\caption{Per Domain F1 Score.}
\end{table}

\begin{figure}[t]
    \centering
    \includegraphics[width=0.98\linewidth]{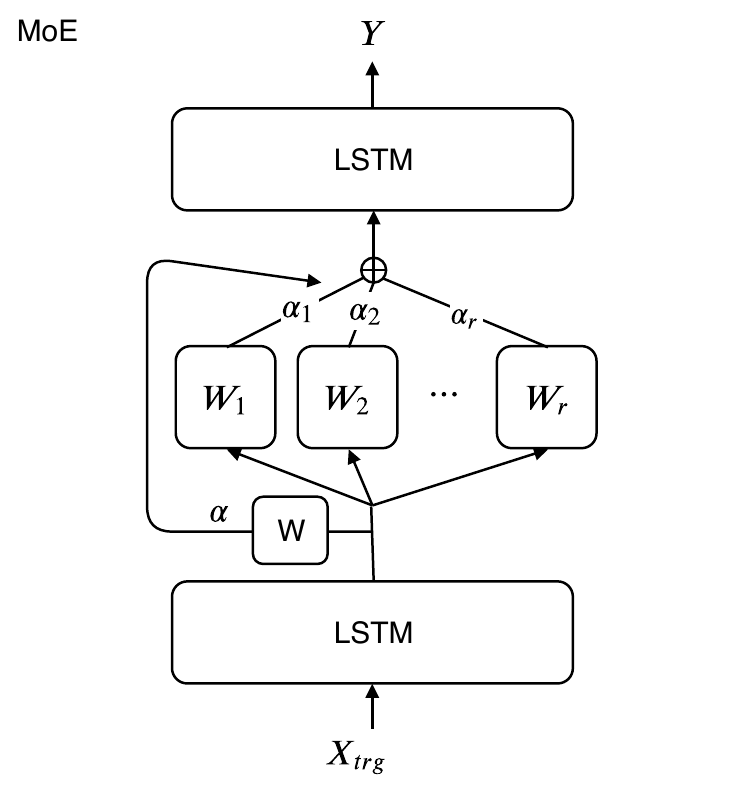}
    \caption{Mixture of Experts (MoE)~\citep{shazeer2017outrageously} model consist of $r$ feed-forward neural network (experts) which are embedded between two LSTM layers, a trainable gating network to select experts.}
    \label{fig:moe}
\end{figure}

\begin{figure}[t]
    \centering
    \includegraphics[width=0.98\linewidth]{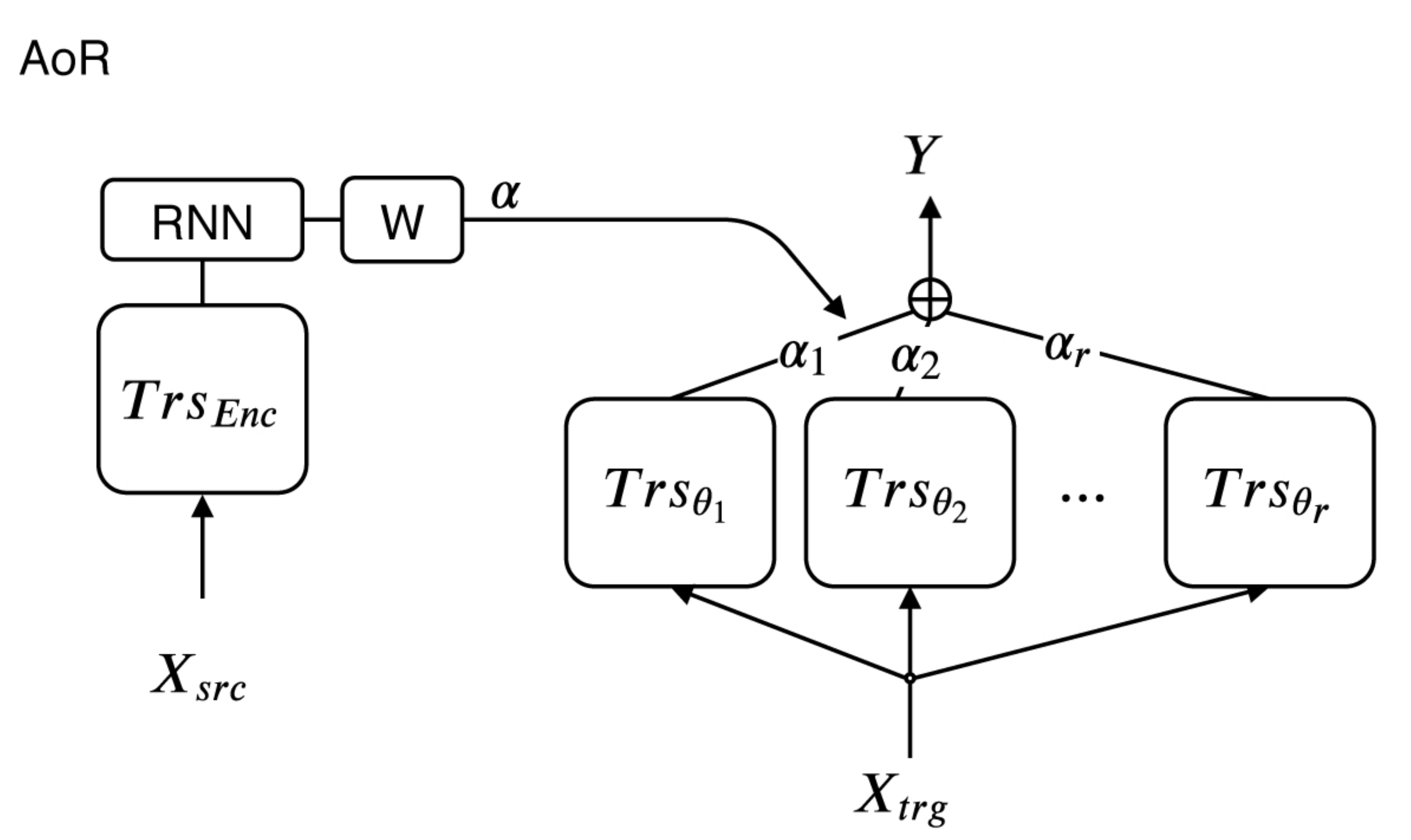}
    \caption{Attention over Representation (AoR) consist of a transformer encoder which encode the source input and compute the attention over the skills. Then $r$ transformer decoder layers computes $r$ specialized representation and the output response is generated based on the weighted sum the representation. In the figure, we omitted the output layer.}
    \label{fig:aor}
\end{figure}

\begin{figure}[t]
    \centering
    \includegraphics[width=0.98\linewidth]{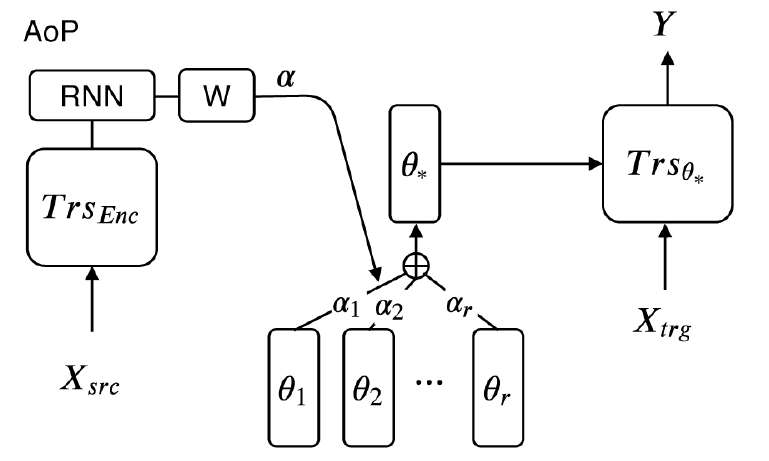}
    \caption{Attention over Parameters (AoP) consist of a transformer encoder which encode the source input and compute the attention over the skills. Then, $r$ specialized transformer decoder layers and a dummy transformer decoder layer parameterized by the weighted sum of the $r$ specialized transformer decoder layers parameters. In the figure, we omitted the output layer.}
    \label{fig:aop}
\end{figure}

\end{document}